\begin{document}

\title{Target-Guided Bayesian Flow Networks for Quantitatively Constrained CAD Generation}

\author{Wenhao Zheng}
\orcid{0000-0002-4203-0852}
\affiliation{
 \institution{College of Computer Science, Sichuan University} 
 \city{Chengdu} \country{China}
 }
 \email{zhengwenhao@stu.scu.edu.cn}
 \additionalaffiliation{\institution{Engineering Research Center of Machine Learning and Industry Intelligence, Ministry of Education, Chengdu, China}}
 
\author{Chenwei Sun}
\orcid{0009-0004-0111-0030}
\affiliation{
 \institution{College of Computer Science, Sichuan University} 
 \city{Chengdu} \country{China}
 }
\email{sunchenwei@stu.scu.edu.cn}

\author{Wenbo Zhang}
\orcid{0009-0009-2411-1729}
\affiliation{
 \institution{School of Computer Science and Technology, Xidian University} 
 \city{Xi'an} \country{China}
 }
  \email{zhangwenbo01@xidian.edu.cn}

\author{Jiancheng Lv} \authornotemark[1] 
\orcid{0000-0001-6551-3884}
\authornote{Corresponding authors.}
\affiliation{
 \institution{College of Computer Science, Sichuan University} 
 \city{Chengdu} \country{China}
 }
\email{lvjiancheng@scu.edu.cn}

\author{Xianggen Liu} \authornotemark[1] \authornotemark[2]
\orcid{0000-0001-7340-4602}
\affiliation{
 \institution{College of Computer Science, Sichuan University} 
 \city{Chengdu} \country{China}
 }
\email{liuxianggen@scu.edu.cn}

\renewcommand{\shortauthors}{Wenhao Zheng, Chenwei Sun, Wenbo Zhang, Jiancheng Lv, \& Xianggen Liu}

\begin{abstract}
Deep generative models, such as diffusion models, have shown promising progress in image generation and audio generation via simplified continuity assumptions. However, the development of generative modeling techniques for generating multi-modal data, such as parametric CAD sequences, still lags behind due to the challenges in addressing long-range constraints and parameter sensitivity. In this work, we propose a novel framework for quantitatively constrained CAD generation, termed Target-Guided Bayesian Flow Network (TGBFN). For the first time, TGBFN handles the multi-modality of CAD sequences (i.e., discrete commands and continuous parameters) in a unified continuous and differentiable parameter space rather than in the discrete data space. In addition, TGBFN penetrates the parameter update kernel and introduces a guided Bayesian flow to control the CAD properties. To evaluate TGBFN, we construct a new dataset for quantitatively constrained CAD generation. Extensive comparisons across single-condition and multi-condition constrained generation tasks demonstrate that TGBFN achieves state-of-the-art performance in generating high-fidelity, condition-aware CAD sequences. The code is available at \url{https://github.com/scu-zwh/TGBFN}.
\end{abstract}

\begin{CCSXML}
<ccs2012>
   <concept>
       <concept_id>10010147.10010178.10010224.10010225</concept_id>
       <concept_desc>Computing methodologies~Computer vision tasks</concept_desc>
       <concept_significance>500</concept_significance>
       </concept>
 </ccs2012>
\end{CCSXML}

\ccsdesc[500]{Computing methodologies~Computer vision tasks}

\keywords{Parametric CAD Generation; Bayesian Flow Networks; Conditional Generative Modeling}



\maketitle

\section{Introduction}

Computer-Aided Design (CAD) generative modeling plays a critical role in modern design and engineering by facilitating manufacturing, visualization, and data management, advancing contemporary practices in these fields~\cite{jayaraman2021uv,lou2023brep,ritchie2023neurosymbolic}. In CAD design, model shapes are typically represented as sequences of commands or operations (e.g., line, arc, circle) executed by CAD tools~\cite{camba2016parametric,hoffmann2001towards,shah1998designing}. This representation, known as a parametric CAD model, can be treated as a specialized form of text, making it well-suited for deep learning methods~\cite{li2024cad,para2021sketchgen,ganin2021computer,liu2021simulated}. Recent studies have explored various applications of parametric CAD generative modeling~\cite{jones2023self,jones2021automate,willis2022joinable}; however, two inherent challenges hinder the practical application of deep learning to parametric CAD sequence generation: \emph{modality duality} and \emph{parameter sensitivity}. Modality duality arises from the interdependence between discrete categorical command types (e.g., line, arc, circle) and continuous command parameters~\cite{wu2021deepcad,li2024cad,khan2024text2cad}. In contrast, parameter sensitivity refers to the propagation of errors through sequential operations, where even minor deviations (e.g., ±0.1 mm in foundational sketches) can result in catastrophic geometric violations or manufacturing failures~\cite{xu2023hierarchical,fan2022cadtransformer}.

\begin{figure}[t]
    \centering
    \subfigure[Conceptual illustration of the diffusion process]{
        \includegraphics[width=0.8\linewidth]{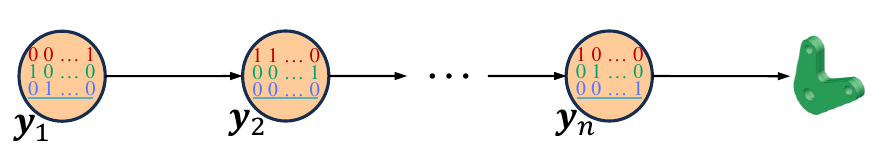}
        \label{fig:diffusion}
    }

    \subfigure[Conceptual illustration of our guided Bayesian inference approach]{
        \includegraphics[width=1.0\linewidth]{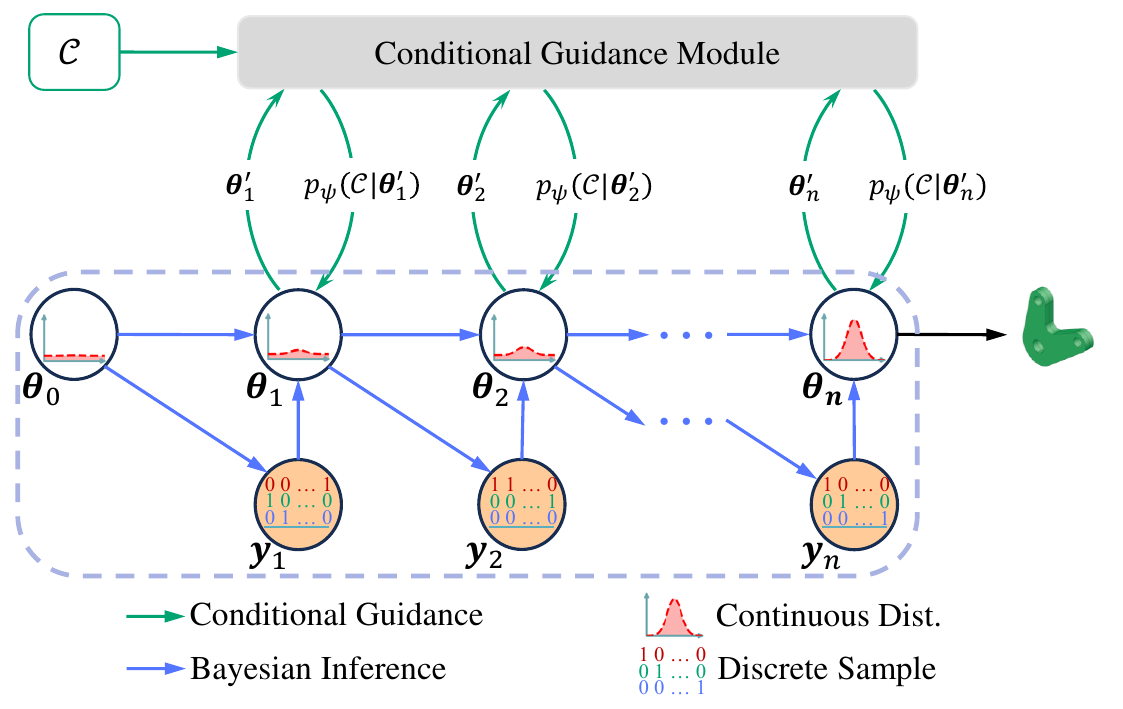}
        \label{fig:method}
    }


    \caption{(a) Conceptual illustration of the diffusion process, which learns a mapping from discrete data samples to a continuous Gaussian prior. (b) Conceptual illustration of our proposed guided Bayesian inference approach. Unlike diffusion models that map samples to a prior, our method directly learns the mapping between the data distribution and a Gaussian distribution. Moreover, it enables precise control over target quantitative CAD properties via a conditional guidance module without necessitating retraining of the Bayesian flow skeleton networks.}

    \vspace{-0.5cm}
    
    \label{fig:gbi}
\end{figure}

Despite these challenges, the field has attracted extensive research efforts~\cite{fan2022cadtransformer,khan2024cad,kodnongbua2023reparamcad}. Autoregressive transformers~\cite{wu2021deepcad,xu2022skexgen,lambourne2022reconstructing} have achieved notable success; however, their sequential prediction mechanisms are prone to error accumulation~\cite{xu2023hierarchical,fan2022cadtransformer}. Recent investigations employing diffusion models (DMs)~\cite{ho2020denoising,dhariwal2021diffusion,rombach2022high} have demonstrated improved reconstruction fidelity through iterative denoising~\cite{ma2024draw,gao2024diffcad,zhang2025diffusion}, yet these approaches face inherent limitations. Specifically, the modality duality between discrete commands and continuous parameters conflicts with the continuous noise assumptions inherent to DMs~\cite{ma2024draw,ho2020denoising}, and the introduction of noise can further exacerbate sensitivity issues~\cite{bai2024regularized}. To address these limitations, CAD-Diffuser~\cite{ma2024draw} proposes a multimodal diffusion framework that resolves modality-specific inconsistencies in unified Gaussian diffusion approaches. At the same time, CADiffusion~\cite{bai2024regularized} develops a regularized decoder architecture that constrains noise-space navigation to enhance output stability. Nevertheless, both approaches exhibit significant drawbacks: the former~\cite{ma2024draw} relies on complex, artifact-prone architectural designs, and the latter~\cite{bai2024regularized} lacks rigorous constraints for effectively defining generative design spaces.

In this paper, we propose a novel framework, coined Target-Guided Bayesian Flow Network (TGBFN), to build a unified continuous parameter space to represent and generate the discrete and continuous elements in CAD sequences (Figure~\ref{fig:gbi}). Specifically, TGBFN first proposes a guided Bayesian flow to control the CAD property by penetrating the conventional parameter update kernel. Then, we introduce an unbiased inference mechanism to eliminate the accumulated errors during the iterative sampling process in BFN. Thirdly, TGBFN adopts a calibrated distribution estimation to approach the theoretically full distribution of data, enhancing the fidelity of generated CAD.

To validate our conditional CAD generative approach, we construct the first CAD dataset with explicit quantitative constraints, using surface area and volume as target conditions. Evaluation on both single-condition and multi-condition constrained generation tasks demonstrates the strong performance of TGBFN.

In summary, our main contributions are as follows:
\begin{itemize}
    \item We propose a novel Bayesian framework that simultaneously models multi-modality CAD sequences within a unified parameter space, establishing a new generative paradigm for precise CAD generation.
    \item The proposed guided Bayesian flow is theoretically guaranteed to be an accurate approximation of the parameter update process under given quantitative conditions.
    \item We also introduce an unbiased inference mechanism and calibrated distribution estimation to further mitigate the exposure bias and enhance the generation quality of CAD sequence.
    \item  Extensive experimental results show that TGBFN significantly outperforms both auto-regressive decoders (e.g., Transformer and DeepCAD) and diffusion-based CAD generators. 
\end{itemize}

\section{Related Work}
\label{sec:relate}

\subsection{Parametric CAD Modeling}

Large-scale parametric CAD datasets~\cite{koch2019abc,wu2021deepcad,willis2021fusion} have significantly accelerated language-based CAD sequence modeling~\cite{guo2022complexgen,jayaraman2022solidgen,wang2022neural}. Research in this field generally falls into two paradigms: unconditional approaches that generate high-quality, diverse 3D models~\cite{wang2020pie,sharma2020parsenet,smirnov2019learning,willis2021engineering} and conditional methods that translate geometric constraints into CAD operations~\cite{ren2022extrudenet,willis2021engineering,xu2021inferring}.

Within conditional generation, input conditions vary widely. Some methods utilize point clouds—as exemplified by DeepCAD~\cite{wu2021deepcad}, while others rely on partial CAD inputs, as in HNC-CAD~\cite{xu2023hierarchical}. Additional approaches incorporate target boundary representations~\cite{willis2021engineering,xu2021inferring} or voxel grids, as employed in SECAD~\cite{lv2021voxel,lambourne2022reconstructing,li2023secad}, or textual prompts~\cite{khan2024text2cad,guo2022complexgen,wang2025text} and image-based inputs~\cite{seff2021vitruvion,chen2024img2cad,you2024img2cad}. Moreover, some techniques use point clouds with or without sequence guidance~\cite{uy2022point2cyl,li2023secad,ren2022extrudenet}. Notably, all these methods depend on \textbf{\emph{qualitative}} input conditions, leaving \textbf{\emph{quantitative}} condition generation largely unexplored. To enable more precise CAD control, we take a key step toward {quantitative conditional generation} by constructing a dedicated dataset and systematically investigating its applications in parametric CAD modeling.

\subsection{Bayesian Flow Networks}
Bayesian Flow Networks (BFNs)~\cite{graves2023bayesian} represent a novel class of generative models that integrate Bayesian inference with flow-based modeling. This framework is particularly appealing because it models the underlying data distribution rather than the data itself—unlike diffusion models~\cite{ho2020denoising,niu2024neural,wu2024diffusion,yu2025knowledge,wu2025enhancing}, which directly model the data through iterative denoising processes. A key advantage of modeling distributions is their inherent continuity and differentiability—properties that are absent in discrete data types (e.g., text)—thereby extending the applicability of BFNs to discrete domains, such as language modeling. Since its inception, the framework has garnered considerable attention \cite{song2023unified,atkinson2025protein,tao2025bayesian}. In particular, \cite{xue2024unifying} theoretically demonstrated that BFNs and diffusion models can be unified from the perspective of stochastic differential equations. Furthermore, GeoBFN \cite{song2023unified} showed that BFNs exhibit less inductive bias than diffusion models, rendering them more suitable for noise-sensitive data.

\section{Preliminary}
\label{sec:pre}

\begin{figure*}[ht]
\centering
    \includegraphics[width=1.0\linewidth]{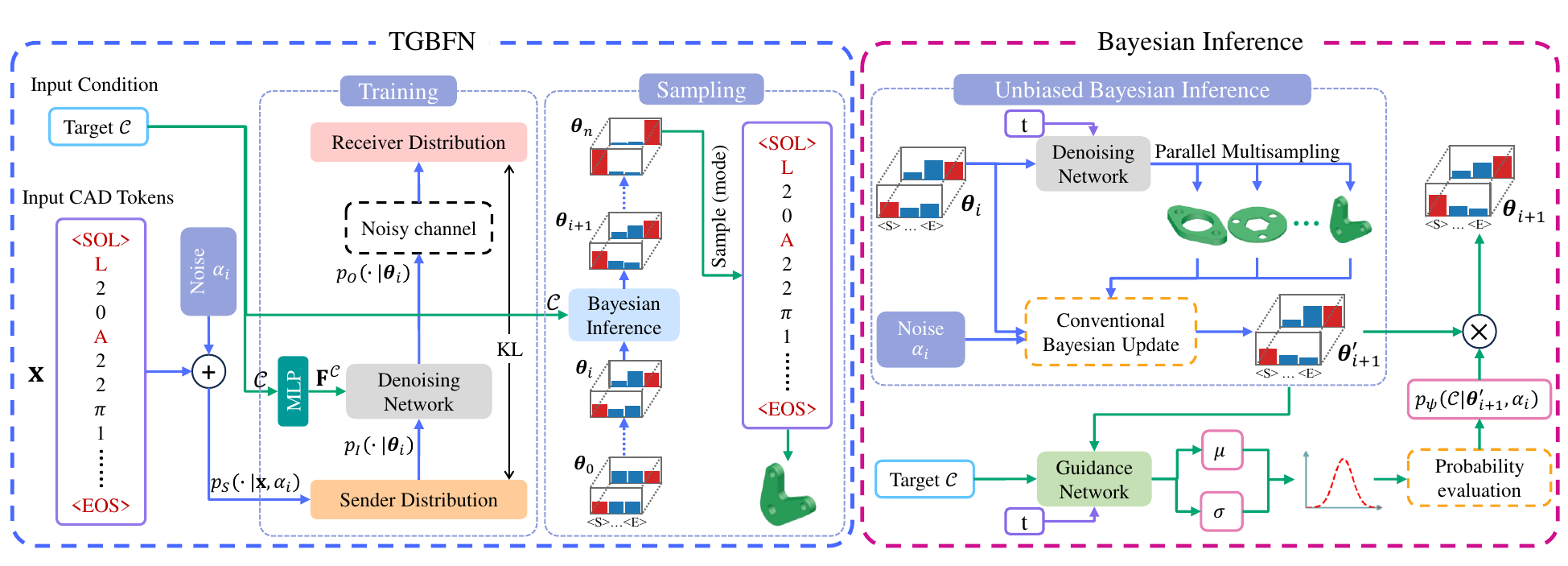}
    \caption{\emph{Method Overview.} The proposed Target-Guided Bayesian Flow Network (TGBFN) implements a Bayesian generative framework for quantitatively constrained parametric CAD sequence generation. During training, the core denoising network and conditional feature extractor (MLP) are optimized by minimizing the KL divergence $\mathcal{D}_{\text{KL}}(p_S \| p_R)$ between the sender and receiver distributions. At inference time, TGBFN performs target-guided sampling using only the desired quantitative condition, enabling accurate and controllable sequence generation. 
    }
    \label{fig:arch}	
\end{figure*}

\subsection{Problem Definition}\label{subsec:prob}

We consider the task of quantitatively constrained CAD generation, where the goal is to generate parametric CAD sequences that satisfy target geometric properties critical to CAD design. Formally, let $\mathbf{x} = (x_1, \dots, x_D) \in \{1, \dots, K\}^D$ denote a parametric CAD token sequence, where $K$ is the vocabulary size and $D$ is the maximum sequence length. Let $\mathcal{C} \in \mathbb{R}_+^d$ represent a vector of $d$ target numerical properties, each taking values in the positive real domain.

The objective is to learn a generator $f: \mathbb{R}_+^d \rightarrow \{1, \dots, K\}^D$ that maps the constraint vector $\mathcal{C}$ to a valid parametric CAD token sequence $\mathbf{x}$ while adhering to the specified quantitative constraints:
\begin{equation}
  \mathbf{x} = f(\mathcal{C}).
\end{equation}

\subsection{Bayesian Flow Networks}
For discrete sequence modeling, Bayesian Flow Networks employ a structured belief state $\boldsymbol{\theta}_i \in \mathbb{R}^{D\times K}$ where each row $\boldsymbol{\theta}_i^{(d)} \in \Delta^{K-1}$ represents categorical distributions for the $d$-th token position. This formulation preserves the discrete nature of symbolic data while enabling continuous belief updates through exponential family parameterization.

The sender distribution for an input sequence $\mathbf{x} \in \{1,\ldots,K\}^D$ generates corrupted embeddings through position-wise Gaussian perturbation:
\begin{equation}
p_S(\mathbf{y}_i\mid\mathbf{x};\alpha_i) = \prod_{d=1}^D \mathcal{N}\left(\mathbf{y}_i^{(d)} \mid \alpha_i(K\mathbf{e}_{x_d}-\mathbf{1}), \alpha_iK\mathbf{I}\right),
\end{equation}
where $\mathbf{e}_{x_d} \in \mathbb{R}^K$ denotes the one-hot encoding of the $d$-th token, and $\mathbf{1}$ is the all-ones vector. The precision schedule $\{\alpha_i\}_{i=1}^n$ controls noise injection intensity, with cumulative precision $\beta_i = \sum_{j=1}^i \alpha_j$ tracking information accumulation.

The Bayesian update distribution implements discrete to continuous mapping through Dirac delta constraints:
\begin{equation}
    p_U(\boldsymbol{\theta}_i\mid\boldsymbol{\theta}_{i-1},\mathbf{x};\alpha)=\sum_{\mathcal{N}(\mathbf{y}\mid\alpha(K\mathbf{e}_{\mathbf{x}}-\mathbf{1}),\alpha K\boldsymbol{I})}\delta\left(\boldsymbol{\theta}-\frac{e^\mathbf{y}\boldsymbol{\theta}_{i-1}}{\sum_{k=1}^Ke_k^\mathbf{y}(\boldsymbol{\theta}_{i-1})_k}\right).
    \label{eq:update}
\end{equation}
This update preserves measure-theoretic consistency while maintaining the belief states in the probability simplex, enabling stable gradient-based learning through subsequent denoising operations. 

Building upon the belief updates, the denoising network $\Phi: \mathbb{R}^{D\times K} \to \mathbb{R}^{D\times K}$ reconstructs clean categorical distributions through time-aware transformations:
\begin{equation}
p_O(\mathbf{x}\mid\boldsymbol{\theta}_i) = \prod_{d=1}^D \mathrm{softmax}\left(\Phi(\boldsymbol{\theta}_i, t_i)^{(d)}\right),
\label{eq:dn}
\end{equation}
where $t_i = (i-1)/n$ and $n \in \mathbb{N}$ denotes the total number of inference steps. Although each $x^{(d)}$ is sampled independently from its corresponding logits $\phi^{(d)}$, these logits are jointly derived from the full belief state $\boldsymbol{\theta}_i$, inducing implicit dependencies across dimensions through the shared network $\Phi$.

The interaction between belief updates and denoising operations is formally captured in the receiver's predictive distribution, which combines positional expectations through Gaussian mixtures:
\begin{equation}
p_R(\mathbf{y}_i\mid\boldsymbol{\theta}_{i-1}) = \prod_{d=1}^D \mathbb{E}_{p_O(x_d\mid\boldsymbol{\theta}_{i-1}^{(d)})}\left[\mathcal{N}\left(\mathbf{y}_i^{(d)}\mid\alpha_i(K\mathbf{e}_{x_d}-\mathbf{1}), \alpha_iK\mathbf{I}\right)\right].
\label{eq:receive}
\end{equation}

These components are unified through the training objective that minimizes the variational bound via position-wise KL divergences~\cite{kullback1951information}:
\begin{equation}
\mathcal{L} = \sum_{i=1}^n \sum_{d=1}^D \gamma_i D_{\mathrm{KL}}\left(p_S(\mathbf{y}_i^{(d)}|x_d;\alpha_i) \parallel p_R(\mathbf{y}_i^{(d)}|\boldsymbol{\theta}_{i-1}^{(d)})\right),
\end{equation}
where $\gamma_i$ denotes the importance weight for the $i$-th step in the parameter refinement process. This systematic framework ensures coordinated optimization of all components while maintaining the theoretical guarantees of Bayesian flow networks.

\section{Methods}
\label{sec:methods}

As illustrated in Figure~\ref{fig:arch}, we propose the Target-Guided Bayesian Flow Network (TGBFN), a novel framework for quantitatively constrained CAD generation. To accomplish the conditional generation task, we reformulate the denoising network (Eq.~\ref{eq:dn}) by incorporating an additional conditional input:
\begin{equation}
p_O(\mathbf{x}\mid\boldsymbol{\theta}_i, \mathcal{C}) = \prod_{d=1}^D \mathrm{softmax}\left(\Phi(\boldsymbol{\theta}_i, t_i, \mathcal{C})^{(d)}\right).
\end{equation}

Despite these modifications, two key challenges remain—exposure bias and precise control of quantitative conditions. To address the challenges, TGBFN incorporates three key components: (1) Unbiased Bayesian Inference, which mitigates exposure bias through parallel sampling (Sec.~\ref{subsec:ubi}); (2) Guided Bayesian Flow, which enables independent control of generation validity and conditional guidance (Sec.~\ref{subsec:cbf}); and (3) Calibrated Distribution Estimation, which preserves statistical fidelity throughout the sampling process (Sec.~\ref{subsec:cdp}). The rest of this section will elaborate on these components.



\subsection{Unbiased Bayesian Inference} \label{subsec:ubi}

Conventional Bayesian Flow Networks (BFNs) accumulate approximation errors during their iterative sampling process, which induces exposure bias and fundamentally limits modeling accuracy. Our unbiased Bayesian inference framework mitigates this error propagation through multi-sample expectation preservation while maintaining real-time performance.

The theoretical Bayesian update (Eq.~\ref{eq:update}) requires traversing the entire value space of the discrete sequence—a process that is computationally intractable. Consequently, practical BFN implementations resort to a single sampling step:
\begin{equation}
\boldsymbol{\theta}_i \leftarrow \frac{e^{\mathbf{y}} \boldsymbol{\theta}_{i-1}}{\sum_k (e^{\mathbf{y}} \boldsymbol{\theta}_{i-1})_k}\quad\text{where}\quad \mathbf{y} \sim \mathcal{N}(\alpha(K\mathbf{e_x}-1), \alpha K\mathbf{I}),
\end{equation}
where the distribution parameters $\boldsymbol{\theta}=(\theta^{1},...,\theta^{D})\in[0,1]^{KD}$ define categorical distributions across $D$ discrete variables, with each $\theta^{d}=(\theta^{d}_1,...,\theta^{d}_K)\in\Delta^{K-1}$ representing the parameters of a $K$-class categorical distribution for variable $d$. Here $\theta^{d}_k\in[0,1]$ specifies the probability of variable $d$ taking class $k$, satisfying $\sum_{k=1}^K\theta^{d}_k=1$. The composite embedding $\mathbf{e_x}\stackrel{\text{def}}{=}(\mathbf{e}_{x^{(1)}},...,\mathbf{e}_{x^{(D)}})\in\mathbb{R}^{KD}$ aggregates one-hot encodings where each $\mathbf{e}_{x^{(d)}}\in\{0,1\}^K$ is a basis vector with $(\mathbf{e}_{x^{(d)}})_k=\delta_{jk}$ when $x^{(d)}=j$, using Kronecker delta notation $\delta_{jk}=1$ if $j=k$ else $0$.

Our multi-sample Bayesian update mechanism mitigates this exposure bias through parallel uncertainty propagation. By executing $m$ independent sampling paths, we update the distribution as follows:
\begin{equation}
\boldsymbol{\theta}_i \leftarrow \frac{1}{m}\sum_{j=1}^m \frac{e^{\mathbf{y}_j} \boldsymbol{\theta}_{i-1}}{\sum_k (e^{\mathbf{y}_j} \boldsymbol{\theta}_{i-1})_k}\quad \text{where} \quad \mathbf{y}_j \sim \mathcal{N}(\alpha(K\mathbf{e_x} - \mathbf{1}), \alpha K\mathbf{I}),
\end{equation}
where $m$ controls the number of parallel samples, and $\mathbf{1} \in \mathbb{R}^K$ denotes the all-ones vector. Experimental results show that increasing \( m \) improves MSE, MAE, and PCC, indicating more accurate and stable inference. These findings suggest that minimal parallelism is both effective and efficient for mitigating exposure bias.



\subsection{Guided Bayesian Flow}\label{subsec:cbf}

While Bayesian Flow Networks (BFNs) offer theoretical advantages for modeling distributional parameters, they lack an effective mechanism for incorporating conditional constraints during generation. To address this limitation, we introduce a principled conditional generation framework that decomposes the Bayesian inference process. Theorem~\ref{cor:cbu} provides the theoretical foundation by factorizing the conditional Bayesian update into two independent components: a standard Bayesian update and a condition-aware guidance term while maintaining full probabilistic consistency.

\begin{theorem}[Proof in Appendix]\label{cor:cbu}
Let \(\mathcal{C} \in \mathbb{R}_+^d\) denote target numerical conditions, and let $\mathbf{x} \in \mathcal{X}$ represent a parametric CAD token sequence. If $\mathcal{C}$ and $\mathbf{x}$ are conditionally independent given the current distribution parameters $\boldsymbol{\theta}_i$, i.e.,
\begin{equation}
    p(\boldsymbol{\theta}_{i-1},\mathbf{x},\alpha,\mathcal{C}\mid\boldsymbol{\theta}_{i}) = p(\boldsymbol{\theta}_{i-1},\mathbf{x},\alpha\mid\boldsymbol{\theta}_{i})p(\mathcal{C}\mid\boldsymbol{\theta}_{i}),
\end{equation}
then the conditional Bayesian update admits the following decomposition:
\begin{equation}
p(\boldsymbol{\theta}_i\mid\boldsymbol{\theta}_{i-1},\mathbf{x},\alpha,\mathcal{C}) \propto \underbrace{p(\boldsymbol{\theta}_i\mid\boldsymbol{\theta}_{i-1},\mathbf{x},\alpha)}_{\text{Standard Update}}\quad \cdot \underbrace{p(\mathcal{C}\mid\boldsymbol{\theta}_i, \alpha),}_{\text{Conditional Guidance}}
\end{equation}
which separates the intrinsic BFN dynamics from external condition-specific constraints.
\end{theorem}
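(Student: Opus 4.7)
The plan is to obtain the decomposition by a direct application of Bayes' rule to the posterior $p(\boldsymbol{\theta}_i\mid\boldsymbol{\theta}_{i-1},\mathbf{x},\alpha,\mathcal{C})$, viewed as a density over $\boldsymbol{\theta}_i$ with the remaining variables held fixed, and then invoking the conditional independence hypothesis to eliminate the unwanted conditioning. Concretely, I would first rewrite the posterior as
\[
p(\boldsymbol{\theta}_i\mid\boldsymbol{\theta}_{i-1},\mathbf{x},\alpha,\mathcal{C}) \;=\; \frac{p(\mathcal{C}\mid\boldsymbol{\theta}_i,\boldsymbol{\theta}_{i-1},\mathbf{x},\alpha)\, p(\boldsymbol{\theta}_i\mid\boldsymbol{\theta}_{i-1},\mathbf{x},\alpha)}{p(\mathcal{C}\mid\boldsymbol{\theta}_{i-1},\mathbf{x},\alpha)},
\]
and note that the denominator does not depend on $\boldsymbol{\theta}_i$ and may therefore be absorbed into the proportionality symbol.

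The intermediate step is to convert the stated joint-factorization hypothesis into the Markov-style simplification $p(\mathcal{C}\mid\boldsymbol{\theta}_i,\boldsymbol{\theta}_{i-1},\mathbf{x},\alpha)=p(\mathcal{C}\mid\boldsymbol{\theta}_i)$. This follows by dividing both sides of the assumed identity $p(\boldsymbol{\theta}_{i-1},\mathbf{x},\alpha,\mathcal{C}\mid\boldsymbol{\theta}_i)=p(\boldsymbol{\theta}_{i-1},\mathbf{x},\alpha\mid\boldsymbol{\theta}_i)\,p(\mathcal{C}\mid\boldsymbol{\theta}_i)$ by $p(\boldsymbol{\theta}_{i-1},\mathbf{x},\alpha\mid\boldsymbol{\theta}_i)$ on the support of the joint law and applying the definition of conditional probability. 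Substituting this simplification into the Bayes expression above immediately yields
\[
p(\boldsymbol{\theta}_i\mid\boldsymbol{\theta}_{i-1},\mathbf{x},\alpha,\mathcal{C}) \;\propto\; p(\boldsymbol{\theta}_i\mid\boldsymbol{\theta}_{i-1},\mathbf{x},\alpha)\, p(\mathcal{C}\mid\boldsymbol{\theta}_i),
\]
which is the announced decomposition into a standard Bayesian update term and a condition-aware guidance term. The factor $p(\mathcal{C}\mid\boldsymbol{\theta}_i)$ may be written equivalently as $p(\mathcal{C}\mid\boldsymbol{\theta}_i,\alpha)$, since marginalising the same joint-factorization hypothesis over $(\boldsymbol{\theta}_{i-1},\mathbf{x})$ shows that $\mathcal{C}$ is decoupled from $\alpha$ as well once $\boldsymbol{\theta}_i$ is known.

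Since the argument is essentially a textbook manipulation of Bayes' rule together with a single conditional independence hypothesis, I do not anticipate any deep technical obstacle. The main point of care is book-keeping: clearly distinguishing which variables are treated as conditioning arguments versus held constant under the $\propto$ relation, and explicitly deriving the Markov-type simplification from the joint factorization form in which the hypothesis is phrased rather than silently assuming it. A brief remark that all quantities above are densities with respect to a common reference measure, so that the ratios are well defined on the support of the joint law, would round out the write-up.
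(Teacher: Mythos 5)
Your proposal is correct and follows essentially the same route as the paper's proof: apply Bayes' rule to the posterior over $\boldsymbol{\theta}_i$, use the joint-factorization hypothesis to reduce $p(\mathcal{C}\mid\boldsymbol{\theta}_i,\boldsymbol{\theta}_{i-1},\mathbf{x},\alpha)$ to $p(\mathcal{C}\mid\boldsymbol{\theta}_i)$, and absorb the $\boldsymbol{\theta}_i$-independent normalizer into the proportionality. Your one refinement is that you derive the decoupling $p(\mathcal{C}\mid\boldsymbol{\theta}_i)=p(\mathcal{C}\mid\boldsymbol{\theta}_i,\alpha)$ by marginalizing the stated hypothesis over $(\boldsymbol{\theta}_{i-1},\mathbf{x})$, whereas the paper simply asserts this additional conditional independence.
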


This decomposition defined in Theorem 4.1 leads to a dual-objective formulation: the standard Bayesian term preserves geometric fidelity, while the guidance term encourages alignment with the desired target condition. $p(\mathcal{C}\mid\boldsymbol{\theta}_i, \alpha)$ stands for the posterior probability that the CAD sequence $\mathbf{x}$ denoised from $\boldsymbol{\theta}_i$ has the property $\mathcal{C}$, which is intractable in 
theory. 

To model the conditional guidance distribution \( p(\mathcal{C} \mid \boldsymbol{\theta}_i, \alpha) \), we assume a Gaussian form parameterized by a mean-variance prediction network:
\begin{equation}
p_{\psi}(\mathcal{C} \mid \boldsymbol{\theta}_i, \alpha) := \mathcal{N}\left(\mathcal{C}; \mu_{\psi}(\boldsymbol{\theta}_i, \alpha), \Sigma_{\psi}(\boldsymbol{\theta}_i, \alpha)\right),
\end{equation}
where \( \mu_{\psi} \) and \( \Sigma_{\psi} \) are neural networks that predict the mean and covariance of the target distribution. A critical distinction is that the guidance network's input is not a specific parametric CAD sequence sample \(\mathbf{x}\) itself, but rather the distribution parameter \(\boldsymbol{\theta}_i\). Since the ground-truth mean and variance corresponding to distribution \(\boldsymbol{\theta}_i\) are not directly observable, we estimate them by leveraging the theoretical formulation presented in Theorem~\ref{thm:p2}. This derivation provides a principled foundation for constructing the supervisory signals required to train the conditional guidance network.

\begin{theorem}[Proof in Appendix]\label{thm:p2}
Let $\boldsymbol{\theta}_i$ denote the distribution parameters at the $i$-th step, $\alpha$ an accuracy parameter, and $(\mathbf{x},\mathcal{C})$ a data pair drawn from the dataset $\mathcal{D}$. If the conditional distribution $p(\mathcal{C}\mid\boldsymbol{\theta}_i, \alpha)$ follows a Gaussian distribution $\mathcal{N}(\mu(\boldsymbol{\theta}_i, \alpha),\Sigma(\boldsymbol{\theta}_i, \alpha))$, we have:
\begin{subequations}
\begin{align}
\mu(\boldsymbol{\theta}_i, \alpha) &= \frac{1}{Z} \sum_{(\mathbf{x}, \mathcal{C}) \sim \mathcal{D}} p_S(\boldsymbol{\theta}_i \mid \mathbf{x}, \alpha) \cdot \mathcal{C}, \\
\Sigma(\boldsymbol{\theta}_i, \alpha) &= \frac{1}{Z} \sum_{(\mathbf{x}, \mathcal{C}) \sim \mathcal{D}} p_S(\boldsymbol{\theta}_i \mid \mathbf{x}, \alpha) \left[ \mathcal{C} - \mu(\boldsymbol{\theta}_i, \alpha) \right]^2,
\end{align}
\end{subequations}
where $Z = \sum_{(\mathbf{x}, \mathcal{C}) \sim \mathcal{D}} p_S(\boldsymbol{\theta}_i \mid \mathbf{x}, \alpha)$ is the normalization constant.
\end{theorem}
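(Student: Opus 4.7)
The plan is to derive the two claimed identities by first writing $p(\mathcal{C}\mid\boldsymbol{\theta}_i,\alpha)$ as a dataset mixture via Bayes' rule and then matching moments with the stated Gaussian hypothesis. Concretely, I intend to (i) invert the sender channel through Bayes' rule, (ii) marginalize over the latent sequence $\mathbf{x}$ using the natural conditional independence $\boldsymbol{\theta}_i\perp\mathcal{C}\mid\mathbf{x}$ (a direct consequence of the sender construction in Sec.~\ref{sec:pre}, which injects Gaussian noise driven only by the one-hot encoding of $\mathbf{x}$), (iii) replace the intractable joint data distribution by its empirical counterpart on $\mathcal{D}$, and (iv) compute the first and second central moments of the resulting expression, which by the Gaussian assumption must coincide with $\mu(\boldsymbol{\theta}_i,\alpha)$ and $\Sigma(\boldsymbol{\theta}_i,\alpha)$.

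For steps (i)--(iii), I would start from $p(\mathcal{C}\mid\boldsymbol{\theta}_i,\alpha) = p(\boldsymbol{\theta}_i,\mathcal{C}\mid\alpha)/p(\boldsymbol{\theta}_i\mid\alpha)$ and expand the joint as
\[
p(\boldsymbol{\theta}_i,\mathcal{C}\mid\alpha) \;=\; \sum_{\mathbf{x}} p(\boldsymbol{\theta}_i\mid\mathbf{x},\mathcal{C},\alpha)\,p(\mathbf{x},\mathcal{C}).
\]
The conditional independence collapses the first factor to $p_S(\boldsymbol{\theta}_i\mid\mathbf{x},\alpha)$, and substituting the empirical joint over $\mathcal{D}$ turns the outer sum into an average over paired samples $(\mathbf{x},\mathcal{C})\sim\mathcal{D}$, with normalizer $Z=\sum_{(\mathbf{x},\mathcal{C})\sim\mathcal{D}} p_S(\boldsymbol{\theta}_i\mid\mathbf{x},\alpha)$ emerging from the denominator. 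For step (iv), evaluating $\int\mathcal{C}\,p(\mathcal{C}\mid\boldsymbol{\theta}_i,\alpha)\,d\mathcal{C}$ by swapping the integral with the dataset sum yields the first claimed formula directly, and applying the same manipulation to the second central moment $\int(\mathcal{C}-\mu)^2\,p(\mathcal{C}\mid\boldsymbol{\theta}_i,\alpha)\,d\mathcal{C}$ yields the covariance formula. Because a Gaussian is uniquely determined by its first two moments, these weighted sums must be identified with the parameters $(\mu,\Sigma)$ of the assumed Gaussian.

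The main obstacle I expect is reconciling the empirical-mixture representation, which is a discrete Dirac mixture indexed by $\mathcal{D}$, with the Gaussian hypothesis stated in the theorem: the raw mixture is manifestly non-Gaussian whenever $|\mathcal{D}|>1$. My resolution is to read the Gaussian assumption as a moment-matched projection—once a Gaussian form is posited for $p(\mathcal{C}\mid\boldsymbol{\theta}_i,\alpha)$, its parameters are uniquely pinned down by the first two moments of the underlying (empirical) conditional, which are precisely the quantities computed above. A secondary bookkeeping point is that the $[\mathcal{C}-\mu]^2$ in the statement should be interpreted as the outer product $(\mathcal{C}-\mu)(\mathcal{C}-\mu)^{\top}$ in the multivariate case; the argument is unchanged, since linearity of expectation handles each coordinate of $\Sigma$ in the same way.
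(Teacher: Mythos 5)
Your proposal is correct and follows essentially the same route as the paper's proof: a Bayes inversion of the sender channel, marginalization over $\mathbf{x}$ (the paper phrases this via a Dirac-delta lemma for the deterministic map $\mathcal{C}=f(\mathbf{x})$, which is equivalent to your conditional-independence step), replacement of the data law by its empirical counterpart on $\mathcal{D}$, and identification of the first two moments of the resulting weighted mixture with $(\mu,\Sigma)$. Your closing remark that the empirical mixture is not literally Gaussian and that the theorem should be read as a moment-matched projection is a fair and slightly more careful gloss on a point the paper leaves implicit.
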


Based on Theorem~\ref{thm:p2}, we generate supervisory annotations for the conditional guidance network by estimating the mean and variance associated with the distribution $\boldsymbol{\theta}_i$. As computing expectations over the entire dataset $\mathcal{D}$ is computationally intensive, we approximate $\mu$ and $\Sigma$ using mini-batch statistics during training. These estimates enable supervised learning of $\mu_{\psi}$ and $\Sigma_{\psi}$ via the following KL-divergence objective:
\begin{equation}
\mathcal{L}_{\text{gbf}} = D_{\text{KL}}\left(p_{\psi}(\mathcal{C} \mid \boldsymbol{\theta}_i, \alpha) \;\big\|\; \hat{p}(\mathcal{C} \mid \boldsymbol{\theta}_i, \alpha)\right),
\end{equation}
where \(\hat{p}\) denotes the empirical Gaussian distribution parameterized by \(\hat{\mu}\) and \(\hat{\Sigma}\). This objective encourages statistical alignment between the learned distribution and the data-driven empirical behavior.

Building on Theorem~\ref{cor:cbu}, we integrate the conditional guidance into the Bayesian update as:
\begin{equation}
\boldsymbol{\theta}_i \leftarrow \frac{e^{\mathbf{y}} \boldsymbol{\theta}_{i-1}}{\sum_k (e^{\mathbf{y}} \boldsymbol{\theta}_{i-1})_k} \cdot \mathcal{N}\left(\mathcal{C}; \mu_{\psi}(\boldsymbol{\theta}_i, \alpha), \Sigma_{\psi}(\boldsymbol{\theta}_i, \alpha)\right),
\end{equation}
where the multiplicative term enforces consistency with the target conditions. The full sampling algorithm is provided in Algorithm~\ref{alg:gbf}. Unlike standard BFN sampling (Algorithm~\ref{alg:bfn} in Appendix), our method introduces an additional condition guidance step (line~\ref{eq:cg}) to refine distribution parameters in accordance with desired constraints.

\setlength{\textfloatsep}{5pt}
\begin{algorithm}[t]
\caption {GBF Sampling}
\label{alg:gbf}
\SetKwInOut{KwInput}{Require}
\KwInput{
target conditions $\mathcal{C}$, $\beta(1) \in \mathbb{R}_+$, number of steps $n \in \mathbb{N}$, number of classes $K \in \mathbb{N}$, the trained conditional guidance network $p_{\psi}$
}
$\boldsymbol{\theta}_0 \leftarrow ( \frac{\mathbf{1}}{\mathbf{K}} )$\;
\For{$i=1$ \textbf{to} $n$}{
    $t \leftarrow \frac{i-1}{n}$\;
    $\mathbf{k} \sim \text{OUTPUT\_DISTRIBUTION}(\boldsymbol{\theta}_{i-1},t,\mathcal{C})$\;
    $\alpha\sim\beta(1)(\frac{2i-1}{n^2})$\;
    $\mathbf{y}\sim\mathcal{N}(\alpha(K\mathbf{e_k-1}), \alpha K\mathbf{I})$\;
    $\boldsymbol{\theta}_i^{'}\leftarrow \frac{e^\mathbf{y}\boldsymbol{\theta}_{i-1}}{\sum_k(e^\mathbf{y}\boldsymbol{\theta}_{i-1})_k}$\;
    $\boldsymbol{\theta}_i\leftarrow\boldsymbol{\theta}_i^{'} \cdot p_{\psi}(\mathcal{C}\mid\boldsymbol{\theta}_i^{'},\alpha)$  \label{eq:cg}
    }
$\mathbf{k} \sim \text{OUTPUT\_DISTRIBUTION}(\boldsymbol{\theta}_n,t,\mathcal{C})$\;
\Return{$\mathbf{k}$}\;
\end{algorithm}



\subsection{Calibrated Distribution Estimation} \label{subsec:cdp}

Conventional Bayesian Flow Networks (BFNs) suffer from fidelity degradation due to coarse distribution approximations during the sampling process~\cite{song2023unified}. Although the theoretical formulation shows that the sender distribution can be derived by full integration over discrete output distributions (Eq.~\ref{eq:receive}), exact computation is infeasible due to the exponential growth of the combinatorial space. To enable tractable inference, practical implementations resort to discrete sampling schemes as approximations to the full integration. The standard BFN sampling process is given by:
\begin{gather}
    \mathbf{k} \sim p_O(\cdot \mid \boldsymbol{\theta}, t), \\
    \mathbf{y} \sim \mathcal{N}(\alpha(K\mathbf{e_k} - \mathbf{1}), \alpha K \mathbf{I}),
\end{gather}
where $\mathbf{k}$ is the output distribution sample and $\mathbf{y}$ is the sender distribution sample. While computationally efficient, this approach introduces systematic bias due to coarse quantization and limited sampling resolution (Algorithm~\ref{alg:bfn} in Appendix).

To address this mismatch, we propose a calibrated expectation-projection mechanism that preserves statistical moments through quantized averaging:
\begin{gather}
    {\mathbf{k}}'(\boldsymbol{\theta}, t) = \mathrm{NEAREST\_CATEGORY}\left( \frac{1}{H} \sum_{h=1}^{H} \mathbf{k}_h(\boldsymbol{\theta}, t) \right), \\
    \mathbf{k}_h \sim p_O(\cdot \mid \boldsymbol{\theta}, t),\\
    \mathbf{y} \sim \mathcal{N}(\alpha(K\mathbf{e_{{k}'}} - \mathbf{1}), \alpha K \mathbf{I}),
\end{gather}
where $\mathrm{NEAREST\_CATEGORY}: \mathbb{R}^D \rightarrow \mathcal{K}^D$ maps the averaged vector to the nearest discrete category, with $\mathcal{K} = \{1, \dots, K\}$ and $H$ denoting the {sampling granularity}, i.e., the number of repeated draws used to calibrate the categorical expectation. This projection is applied dimension-wise:
\begin{equation}
    \mathrm{NEAREST\_CATEGORY}(z^{(d)}) = \mathop{\mathrm{argmin}}\limits_{k \in \mathcal{K}} |z^{(d)} - k|.
\end{equation}

This calibrated strategy reduces bias by aligning the sampled distribution with its target while preserving sampling efficiency. As shown in Figure~\ref{fig:ablation_m}, increasing sampling granularity $H$ improves performance across all metrics, with diminishing returns beyond moderate values—indicating that low granularity is sufficient for effective calibration.


\subsection{Training Procedure} \label{subsec:training}

Our framework consists of two independently trained components: the skeleton Bayesian flow network and the conditional guidance network. The skeleton network is trained to learn the underlying generative structure of CAD sequences in an unconditional manner, capturing the statistical patterns and structural priors of valid geometric programs. In parallel, the conditional guidance network (Section~\ref{subsec:cbf}) is trained to align generated samples with target geometric properties by minimizing the divergence between predicted and empirical distributions. This modular training strategy enables stable optimization and supports precise constraint control during inference.

\section{Experiments}
\label{sec:exp}

\subsection{Setup}

\subsubsection{Dataset Construction} \label{subsec:dataset}

\begin{figure*}[htbp]
\centering
    \includegraphics[width=\linewidth]{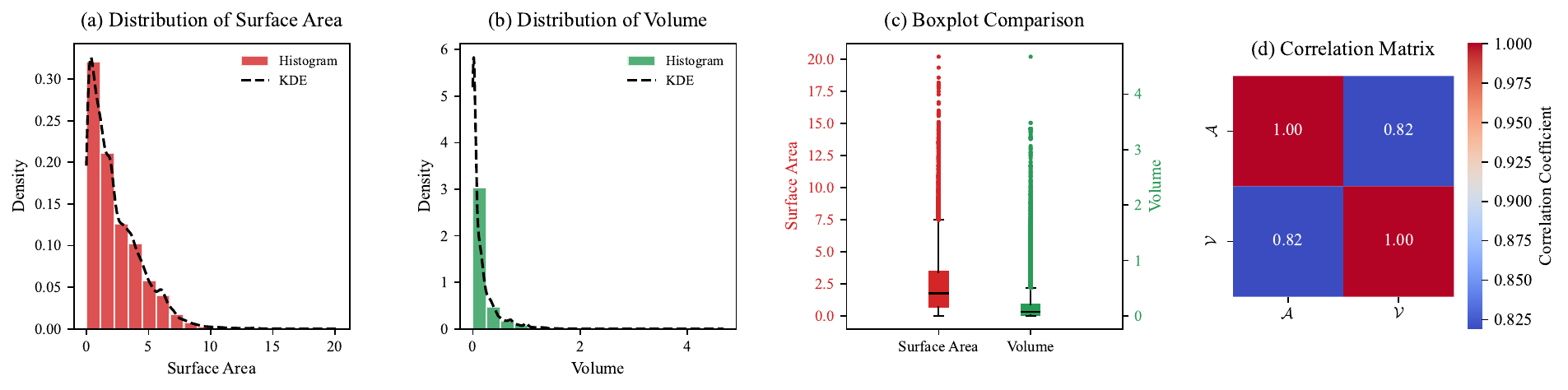}
    \vspace{-0.8cm}
    \caption{Statistical analysis of geometric properties in the proposed CAD dataset: (a) Distribution of surface area; (b) Distribution of volume; (c) Dual-axis boxplots comparing area and volume; (d) Correlation heatmap illustrating covariance between geometric attributes.}
    \vspace{-12pt}
    \label{fig:data}	
\end{figure*}

\begin{table*}[htbp]
\centering
\caption{\centering Comprehensive evaluation of condition-guided CAD generation under multi-condition supervision, where both surface area and volume are jointly used as constraints. Bold: best, \underline{underline}: second best.}
\vspace{-0.3cm}
\begin{tabular}{c||ccc|ccc}
\toprule
{\textbf{}} & \multicolumn{3}{c|}{\bf{Surface Area Constraints}} & \multicolumn{3}{c}{\bf{Volume Constraints}} \\
\cmidrule(lr){2-4}
\cmidrule(lr){5-7}
{\bf{Method}} & {MSE $\downarrow$~} & {MAE $\downarrow$~} & {PCC $\uparrow$~} & {MSE $\downarrow$~} & {MAE $\downarrow$~} & {PCC $\uparrow$~} \\
\midrule
LSTM~\cite{sundermeyer2012lstm}              & 16.328 & 3.2370 & 0.6101 & 1.0003 & 0.7189 & 0.3977 \\
Transformer (Non-Autoregressive)~\cite{gu2018non} & 11.083 & 2.3903 & 0.5592 & 0.4057 & 0.4253 & 0.5254 \\
Transformer (Autoregressive)~\cite{vaswani2017attention}  & 1.3154 & \underline{0.4098} & 0.8338 & 0.0621 & 0.0629 & 0.6399 \\
DeepCAD~\cite{wu2021deepcad}  & 12.708 & 2.4851 & 0.6091 & 0.6371 & 0.4642 & 0.5477 \\
D3PM~\cite{austin2021structured}  & 8.0044 & 0.5789 & 0.7762 & 0.2080 & 0.0655 & 0.8062 \\
BFN~\cite{graves2023bayesian}  & \underline{1.1745} & 0.4935 & \underline{0.8956} & \underline{0.0340} & \underline{0.0577} & \underline{0.8986} \\ 
\midrule
\rowcolor{gray!15}
Ours     & \textbf{0.4235} & \textbf{0.3128} & \textbf{0.9512} & \textbf{0.0078} & \textbf{0.0326} & \textbf{0.9652} \\
\bottomrule
\end{tabular}
\label{tab:va}
\end{table*}

To address the absence of quantitative descriptors in existing CAD datasets and validate our conditional CAD generative method, we construct the first quantitatively constrained CAD dataset, in which each CAD model is paired with quantitative conditions essential for design. Drawing on the DeepCAD dataset~\cite{wu2021deepcad}, which comprises 178,238 parametric models, we compute surface area $\mathcal{A}$ and volume $\mathcal{V}$ through numerical integration by performing boundary representation (B-Rep) analysis with PythonOCC~\cite{paviot_pythonocc_2022}. To ensure data integrity, we adopt a duplicate detection and removal strategy consistent with previous studies~\cite{willis2021engineering,xu2022skexgen,xu2023hierarchical} and retain only those CAD models that contain no more than 64 parametric CAD tokens. Following duplicate removal and filtering, the final dataset comprises 68,219 training instances, 6,327 validation instances, and 5,652 testing instances, with each instance pairing a parametric CAD sequence with its corresponding geometric properties to enable direct supervision for condition-aware parametric CAD sequence generation.

Figure~\ref{fig:data} summarizes key statistics of the computed surface area and volume values. Both distributions (subfigures (a) and (b)) are right-skewed, indicating that most models have small areas and volumes. The dual-axis boxplot in subfigure (c) reveals that volume values have a wider interquartile range, while subfigure (d) shows a strong positive correlation (Pearson’s \(r = 0.82\)) between the two properties, confirming geometric consistency and label reliability.

\subsubsection{Baseline Methods and Metrics}

Existing methods do not comprehensively address the task of quantitatively constrained CAD generation. To establish effective baselines, we adapt three representative generative paradigms: (1) sequential architectures, including LSTM models~\cite{sundermeyer2012lstm} with condition-aware decoding; (2) Transformer-based models, encompassing both autoregressive~\cite{vaswani2017attention} and non-autoregressive~\cite{gu2018non} variants augmented with condition tokens; and (3) specialized generative models, such as DeepCAD~\cite{wu2021deepcad}, which incorporates geometric conditioning, and D3PM~\cite{austin2021structured}, which employs guided diffusion. These adapted baselines provide a comprehensive benchmark for evaluating condition-aware parametric CAD generation under quantitative geometric constraints.

To evaluate model performance in quantitatively constrained CAD generation, we adopt three metrics: (1) \textit{Mean Squared Error} (MSE) to capture deviations critical to manufacturing precision; (2) \textit{Mean Absolute Error} (MAE) to measure average deviation magnitude; and (3) \textit{Pearson Correlation Coefficient} (PCC) to assess linear alignment with target properties. Additionally, we define three evaluation scenarios: two single-condition tasks and one multi-condition task, enabling a thorough assessment of model performance under varying levels of constraint complexity.

\begin{table}[htbp]
\centering
\caption{Comparison of condition-guided CAD generation under single-property supervision. Models are evaluated independently under surface area and volume constraints. Bold: best, \underline{underline}: second best.}
\vspace{-0.3cm}
\resizebox{1.0\linewidth}{!}{
\begin{tabular}{c||ccc}
\toprule
{\bf{Method}} & {MSE $\downarrow$~} & {MAE $\downarrow$~} & {PCC $\uparrow$~} \\
\toprule
\multicolumn{4}{c}{\textbf{Surface Area Constraints}} \\
\toprule
LSTM~\cite{sundermeyer2012lstm}    & 20.623 & 3.3107 & 0.7209 \\
Transformer (Non-Autoregressive)~\cite{gu2018non} & 30.807 & 2.1219 & 0.5582 \\
Transformer (Autoregressive)~\cite{vaswani2017attention} & \underline{1.3033} & \underline{0.7678} & \underline{0.8410} \\
DeepCAD~\cite{wu2021deepcad}   & 8.8662 & 1.8997 & 0.6120 \\
D3PM~\cite{austin2021structured}  & 45.194 & 0.9814 & 0.6790 \\
BFN~\cite{graves2023bayesian}  & 5.1052 & 0.7854 & 0.7865 \\
\midrule
\rowcolor{gray!15}
Ours  & \textbf{0.3503} & \textbf{0.3067} & \textbf{0.9587} \\
\bottomrule
\multicolumn{4}{c}{\textbf{Volume Constraints}} \\
\toprule
LSTM~\cite{sundermeyer2012lstm}   & 0.2470 & 0.4134 & 0.7513 \\
Transformer (Non-Autoregressive)~\cite{gu2018non} & 0.4160 & 0.4308 & 0.6085 \\
Transformer (Autoregressive)~\cite{vaswani2017attention}  & \underline{0.0685} & 0.1097 & 0.6645 \\
DeepCAD~\cite{wu2021deepcad}  & 0.4301 & 0.3707 & 0.4891 \\
D3PM~\cite{austin2021structured}  & 1.5751 & \underline{0.0861} & 0.7703 \\
BFN~\cite{graves2023bayesian} & 0.2188 & 0.1030 & \underline{0.7748} \\
\midrule
\rowcolor{gray!15}
Ours & \textbf{0.0158} & \textbf{0.0529} & \textbf{0.9342} \\ 
\bottomrule
\end{tabular}
}
\label{tab:area}
\end{table}

\begin{figure}[htbp]
    \centering
    \includegraphics[width=1.0\linewidth]{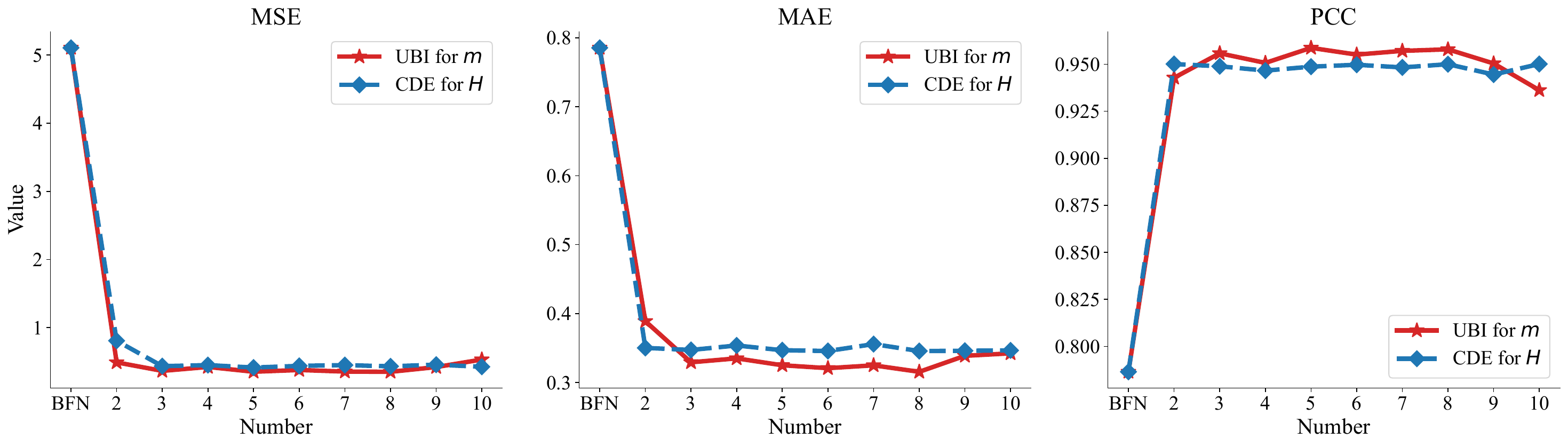}
    \vspace{-7pt}
    \caption{
    Ablation study on the number of parallel samples $m$ in our Unbiased Bayesian Inference (UBI) and sampling granularity $H$ in our Calibrated Distribution Estimation (CDE). Performance consistently improves across all metrics as $m$ and $H$ increase.
    }
    \vspace{-0.3cm}
    \label{fig:ablation_m}
\end{figure}

\subsection{Main Results}

\subsubsection{Performance under Single-Condition Constraints}
\label{subsubsec:result_single}

We begin with the single-condition setting, where the model is guided by either surface area or volume individually. As summarized in Table~\ref{tab:area}, our method consistently outperforms all baselines across key evaluation metrics. Compared to strong alternatives such as BFN and the autoregressive Transformer, our approach reduces mean squared error (MSE) by over 60\% and improves Pearson correlation scores by 7–20\%, reflecting more accurate numerical alignment and greater robustness under geometric constraints. These improvements highlight the model's capacity to internalize and respond to scalar supervision, even in the absence of additional auxiliary signals.

Conventional baselines struggle to balance shape plausibility and constraint satisfaction: DeepCAD and LSTM exhibit poor generalization, while D3PM and non-autoregressive Transformers underperform significantly in accuracy. These results validate the effectiveness of our target-aware guidance and unbiased inference mechanism in modeling property-conditioned CAD generation, even when supervision is limited to a single geometric attribute.

\subsubsection{Performance under Multi-Condition Constraints}
\label{subsubsec:result_multi}

We further evaluate the models under multi-condition supervision, where surface area and volume are jointly imposed as constraints. As shown in Table~\ref{tab:va}, our method maintains its superiority, achieving over 2× improvement in MSE compared to the best-performing baseline. Meanwhile, correlation scores increase by more than 5\%, demonstrating the model’s ability to capture cross-property dependencies while maintaining numerical fidelity.

In contrast, competitive baselines such as BFN and D3PM exhibit relatively strong correlations but suffer from high absolute errors, revealing a mismatch between structural trends and fine-grained accuracy. Transformer-based methods degrade further in this compound setting. These results highlight the scalability and robustness of our framework in handling high-dimensional constraint spaces, underscoring its practical utility in real-world, multi-objective CAD design scenarios.

\subsubsection{Computational Resource}

Table~\ref{tab:time_memory} reports the effect of varying the number of parallel samples $m$ in our Unbiased Bayesian Inference (UBI) module. The results show that increasing $m$ does not increase runtime but only raises memory usage, meaning substantial accuracy improvements can be achieved without compromising sampling efficiency. For example, setting $m=4$ results in a 75.6\% reduction in MSE with only a 5.7\% increase in memory usage and a slight decrease in runtime. Notably, increasing $m$ beyond this point yields diminishing returns and may even reduce accuracy, suggesting that moderate parallelism provides an effective balance between memory consumption and generative fidelity.

\begin{table}[htbp]
\centering
\caption{Comparison of computational cost and generation accuracy under surface area constraints with varying values of the parallel sampling count \( m \). Results are measured on a single batch. Relative changes are reported with respect to the baseline setting \( m{=}1 \).}
\vspace{-0.3cm}
\resizebox{1.0\linewidth}{!}{%
\begin{tabular}{lccc}
\toprule
Setting  & Memory / Growth  & Runtime / Growth  & MSE / Gain   \\
\midrule
{$m$=1}  & 1351~MB /~ 0.0~\%  &12.06~s /~ +0.0~\% & 0.8469~/~ 0.0~\% \\
{$m$=2}  & 1390~MB /~ 2.9~\%  &12.01~s /~ -0.5~\% &  0.5008~/~ 40.9~\% \\
{$m$=4}  & 1428~MB /~5.7~\%  &11.85~s /~ -1.7~\% &  0.2063~/~ 75.6~\%\\
{$m$=8}  & 1511~MB /~ 11.8~\% &11.97~s   /~ -0.7~\% & 0.4850~/~ 42.7~\% \\
{$m$=16}  & 1977~MB /~ 46.3~\% &12.20~s   /~ +1.2~\% & 0.7191~/~ 15.1~\% \\
\bottomrule
\end{tabular}}
\label{tab:time_memory}
\end{table}

\subsection{Ablation Study} \label{subsec:ablation}

\subsubsection{Impact of Parallel Samples $m$ and Calibration Granularity $H$}

We investigate the influence of two critical sampling parameters in our framework: the number of parallel samples $m$ in Unbiased Bayesian Inference (UBI) and the sampling granularity $H$ in Calibrated Distribution Estimation (CDE), as illustrated in Figure~\ref{fig:ablation_m}. Both parameters play essential roles in enhancing generative performance. Increasing $m$ improves MSE, MAE, and PCC by stabilizing posterior estimates and mitigating exposure bias; however, performance gains saturate beyond a modest threshold, suggesting that small to moderate values offer a practical trade-off between accuracy and computational cost. Similarly, higher $H$ values lead to more precise categorical approximations in CDE, yet diminishing returns are observed as granularity increases. These findings indicate that moderate settings for both $m$ and $H$ are sufficient to realize the majority of performance benefits. Taken together, UBI and CDE offer complementary strengths—posterior stabilization and distributional refinement—that jointly improve sample quality through principled bias correction.

\begin{figure*}[htbp]
    \centering
    \includegraphics[width=0.9\linewidth]{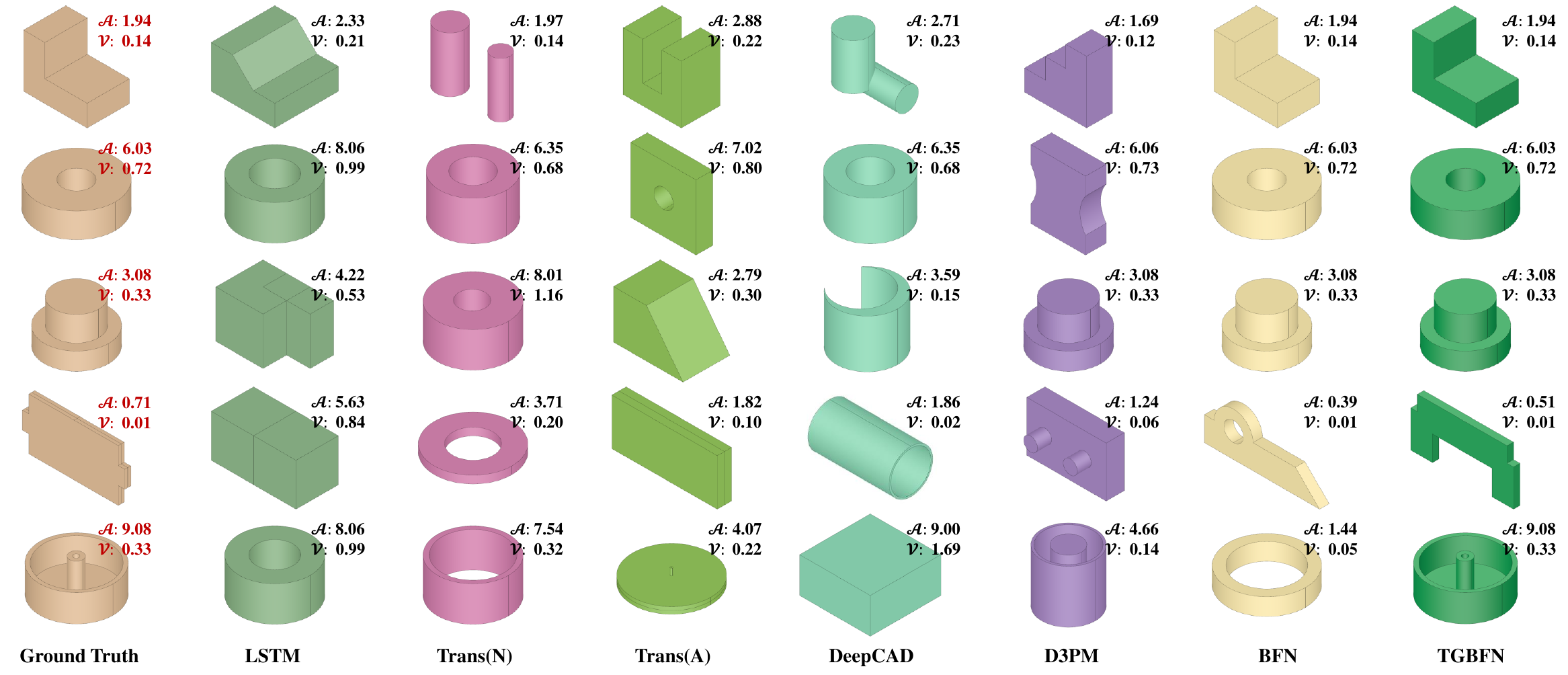}
    \vspace{-0.3cm}
    \caption{Case study of CAD sequence generation under surface area and volume constraints. Each row corresponds to a unique area-volume target pair. The leftmost column depicts the ground truth CAD sequence visualizations, while the remaining columns present outputs generated by various models under the specified constraints.} 
    \vspace{-6pt}
    \label{fig:case}	
\end{figure*}

\begin{table*}[htbp]
\centering
\caption{\centering Ablation study evaluating the contributions of key components in our framework under multi-condition supervision, where surface area and volume are jointly imposed as constraints.}
\vspace{-0.2cm}
\begin{tabular}{l||ccc|ccc}
\toprule
{} & \multicolumn{3}{c|}{\bf{Surface Area Constraints}} & 
\multicolumn{3}{c}{\textbf{Volume Constraints}} \\
\cmidrule(lr){2-4} 
\cmidrule(lr){5-7}
\bf{Model} & {MSE $\downarrow$} & {MAE $\downarrow$} & {PCC $\uparrow$} & {MSE $\downarrow$} & {MAE $\downarrow$} & {PCC $\uparrow$} \\
\midrule
Base Model        & 1.1745 & 0.4935 & 0.8956 & 0.0340 & 0.0577 & 0.8986 \\
Base + UBI    & 0.5108 & 0.3368 & 0.9454 & 0.0100 & 0.0376 & 0.9508 \\
Base + UBI \& GBF  & 0.4302 & 0.3350 & 0.9494 & 0.0109 & 0.0381 & 0.9469 \\
\midrule
\rowcolor{gray!15}
Base + UBI \& GBF \& CDE~(Ours) & \textbf{0.4235} & \textbf{0.3128} & \textbf{0.9512} & \textbf{0.0078} & \textbf{0.0326} & \textbf{0.9652} \\
\bottomrule
\end{tabular}
\vspace{-0.3cm}
\label{tab:ablation_va}
\end{table*}

\subsubsection{Effect of Individual Framework Modules}
To evaluate the contribution of each component in our framework, we conducted a stepwise ablation study under multi-condition constraints (see Table~\ref{tab:ablation_va}). We started with the base model and gradually added the Unbiased Bayesian Inference (UBI), Guided Bayesian Flow (GBF), and Calibrated Distribution Estimation (CDE) module. The complete model, which incorporates all these modules, demonstrated the best performance under both constraints. This result confirms the complementary nature of each component and highlights the overall effectiveness of the framework.

Specifically, UBI provides notable improvements across all evaluation metrics, particularly by reducing MSE and MAE, emphasizing the importance of mitigating exposure bias during sampling. GBF further improves alignment with target conditions through condition-aware guidance within the update process. CDE contributes additional gains, especially under volume constraints, highlighting its role in improved discretization fidelity.

\subsection{Case Study}

To qualitatively assess constraint satisfaction and geometric plausibility, we conduct a case study by randomly selecting five target surface area $\mathcal{A}$ and volume $\mathcal{V}$ values and visualized the corresponding CAD sequences generated by each method (Figure~\ref{fig:case}). The first column presents the ground-truth CAD models that correspond to the given constraints. Each subsequent column shows the CAD sequences generated by different models under the same constraint, along with the corresponding surface area and volume values computed from the generated geometry.

Among the baselines, LSTM, DeepCAD, and D3PM exhibit noticeable deviations in both surface area and volume, frequently failing to maintain geometric consistency. The non-autoregressive Transformer produces plausible shapes, but its performance varies considerably across samples. Although the autoregressive Transformer and BFN demonstrate relatively better alignment with the target constraints, they remain susceptible to overfitting or drift in challenging cases. In contrast, our method (TGBFN) generates geometries that not only precisely match the target values but also display high structural plausibility, underscoring its effectiveness in constraint-aware CAD generation.

\section{Conclusion}

In this paper, we introduce Target-Guided Bayesian Flow Network (TGBFN), a Bayesian generative framework for quantitatively constrained CAD sequence generation. By incorporating the unbiased inference mechanism, guided Bayesian flow, and calibrated distribution estimation, TGBFN effectively addresses critical challenges such as exposure bias and precise conditional control. To validate our approach, we constructed the first benchmark CAD dataset with explicit quantitative constraints, using surface area and volume as target conditions, and conducted comprehensive experiments across multiple tasks. The results demonstrate that TGBFN consistently produces accurate and condition-aware CAD sequences. Future work will explore more complex constraint formulations and extend the framework to broader CAD applications.

\begin{acks}
We sincerely thank the anonymous reviewers and area chairs for their valuable time and insightful comments. This work was supported by the National Major Scientific Instruments and Equipments Development Project of the National Natural Science Foundation of China (No. 62427820), the Fundamental Research Funds for the Central Universities (No. 1082204112364), the Science Fund for Creative Research Groups of the Sichuan Province Natural Science Foundation (No. 2024NSFTD0035), the Science and Technology Major Project of Sichuan Province (No. 2024ZDZX0003), the National Key R\&D Program of China (No. 2024YFB3312503), and the Natural Science Foundation of Sichuan Province (No. 2024NSFTD0048). We also acknowledge the support of the Sichuan Province Engineering Technology Research Center of Broadband Electronics Intelligent Manufacturing.
\end{acks}

\bibliographystyle{ACM-Reference-Format}
\balance
\bibliography{sample-base}

\clearpage
\appendix
\section{Theoretical Proofs}

\subsection{Proof of Theorem~\ref{cor:cbu}}\label{appendix:p1}
\begin{proof}
The Bayesian derivation begins with the standard decomposition:
\begin{equation}
    p(\boldsymbol{\theta}_i | \boldsymbol{\theta}_{i-1}, \mathbf{x}, \alpha, \mathcal{C}) 
    = \frac{p(\boldsymbol{\theta}_i | \boldsymbol{\theta}_{i-1}, \mathbf{x}, \alpha)  p(\mathcal{C} | \boldsymbol{\theta}_i, \boldsymbol{\theta}_{i-1}, \mathbf{x}, \alpha)  p(\boldsymbol{\theta}_{i-1}, \mathbf{x}, \alpha)}{p(\mathcal{C}, \boldsymbol{\theta}_{i-1}, \mathbf{x}, \alpha)}.
\end{equation}

Under the assumed conditional independence, the likelihood term simplifies as follows:
\begin{equation}
    \begin{aligned}
    p(\mathcal{C} \mid \boldsymbol{\theta}_i, \boldsymbol{\theta}_{i-1}, \mathbf{x}, \alpha)
    &= \frac{p(\boldsymbol{\theta}_{i-1}, \mathbf{x}, \alpha, \mathcal{C} \mid \boldsymbol{\theta}_i) \cdot p(\boldsymbol{\theta}_i)}{p(\boldsymbol{\theta}_i, \boldsymbol{\theta}_{i-1}, \mathbf{x}, \alpha)} \\
    &= \frac{p(\boldsymbol{\theta}_{i-1}, \mathbf{x}, \alpha \mid \boldsymbol{\theta}_i) \cdot p(\mathcal{C} \mid \boldsymbol{\theta}_i) \cdot p(\boldsymbol{\theta}_i)}{p(\boldsymbol{\theta}_i, \boldsymbol{\theta}_{i-1}, \mathbf{x}, \alpha)} \\
    &= p(\mathcal{C} \mid \boldsymbol{\theta}_i).
    \end{aligned}
\end{equation}

Furthermore, the hyperparameter $\alpha$ is conditionally independent of the target property $\mathcal{C}$ given $\boldsymbol{\theta}_i$, which gives:
\begin{equation}
p(\mathcal{C} \mid \boldsymbol{\theta}_i) = p(\mathcal{C} \mid \boldsymbol{\theta}_i, \alpha).
\end{equation}

Substituting the above into the original expression yields the desired factorization:
\begin{equation}
    \begin{aligned}
    p(\boldsymbol{\theta}_i \mid \boldsymbol{\theta}_{i-1}, \mathbf{x}, \alpha, \mathcal{C}) 
    &= \frac{p(\boldsymbol{\theta}_i \mid \boldsymbol{\theta}_{i-1}, \mathbf{x}, \alpha) \cdot p(\mathcal{C} \mid \boldsymbol{\theta}_i) \cdot p(\boldsymbol{\theta}_{i-1}, \mathbf{x}, \alpha)}{p(\boldsymbol{\theta}_{i-1}, \mathbf{x}, \alpha) \cdot p(\mathcal{C})} \\
    &\propto p(\boldsymbol{\theta}_i \mid \boldsymbol{\theta}_{i-1}, \mathbf{x}, \alpha) \cdot p(\mathcal{C} \mid \boldsymbol{\theta}_i, \alpha),
    \end{aligned}
\end{equation}
which completes the proof.
\end{proof}


\subsection{Proof of Theorem~\ref{thm:p2}}\label{app:p2}
\begin{proof}
We present the theoretical foundation for the constraint-aware estimators introduced in Section~\ref{subsec:cbf}.

\begin{lemma}\label{lemma:measure}
Let $\mathbf{x}$ denote an input sequence, and let $\mathcal{C} = f(\mathbf{x})$ be a property deterministically induced by $\mathbf{x}$. Then, for any parameter setting $(\boldsymbol{\theta}_i, \alpha)$, the likelihood of observing $\mathcal{C}$ is equivalent to the likelihood of observing $\mathbf{x}$ under the constraint $f(\mathbf{x}) = \mathcal{C}$:
\begin{equation}
    p(\mathcal{C} \mid \boldsymbol{\theta}_i, \alpha) \equiv p(\mathbf{x} \mid \boldsymbol{\theta}_i, \alpha) \quad \text{s.t.} \quad f(\mathbf{x}) = \mathcal{C}.
\end{equation}
\end{lemma}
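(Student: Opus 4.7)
The plan is to exploit the deterministic relationship $\mathcal{C} = f(\mathbf{x})$ together with standard marginalization to establish the claimed equivalence. First, I would expand $p(\mathcal{C} \mid \boldsymbol{\theta}_i, \alpha)$ by introducing the latent sequence $\mathbf{x}$ and marginalizing, writing
$$p(\mathcal{C} \mid \boldsymbol{\theta}_i, \alpha) = \sum_{\mathbf{x}} p(\mathcal{C}, \mathbf{x} \mid \boldsymbol{\theta}_i, \alpha) = \sum_{\mathbf{x}} p(\mathcal{C} \mid \mathbf{x}, \boldsymbol{\theta}_i, \alpha)\, p(\mathbf{x} \mid \boldsymbol{\theta}_i, \alpha).$$
The key observation is that because $f$ is deterministic, $p(\mathcal{C} \mid \mathbf{x}, \boldsymbol{\theta}_i, \alpha)$ does not depend on $(\boldsymbol{\theta}_i, \alpha)$ at all and collapses to an indicator $\mathbb{I}[f(\mathbf{x}) = \mathcal{C}]$ (or a Dirac delta if $\mathcal{C}$ is viewed as a continuous quantity).

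Substituting this into the marginal yields
$$p(\mathcal{C} \mid \boldsymbol{\theta}_i, \alpha) = \sum_{\mathbf{x} : f(\mathbf{x}) = \mathcal{C}} p(\mathbf{x} \mid \boldsymbol{\theta}_i, \alpha),$$
which is the content of the lemma: the likelihood of the property $\mathcal{C}$ under $(\boldsymbol{\theta}_i, \alpha)$ is exactly the total probability mass concentrated on sequences that realize it. In the CAD regime considered in the paper, distinct token sequences almost always induce distinct area/volume pairs, so the preimage $f^{-1}(\mathcal{C})$ is essentially a singleton and the right-hand side reduces to a single term. This justifies the literal substitution of $p(\mathbf{x} \mid \boldsymbol{\theta}_i, \alpha)$ for $p(\mathcal{C} \mid \boldsymbol{\theta}_i, \alpha)$ that the subsequent estimator derivations in Theorem~\ref{thm:p2} implicitly rely on.

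The main obstacle will be pinning down the precise meaning of the symbol $\equiv$, since the two sides formally live in different measure spaces ($\mathcal{C} \in \mathbb{R}_+^d$ versus $\mathbf{x}$ in a discrete token space), so a pointwise equality of densities is nonsensical. I would resolve this by framing the result as a pushforward identity under $f$, with the constrained sum on the right serving as the appropriate ``density'' of the induced property. This interpretation is exactly what the downstream proof requires, since Theorem~\ref{thm:p2} reindexes expectations from draws of $\mathcal{C}$ to paired draws $(\mathbf{x}, \mathcal{C}) \sim \mathcal{D}$, and the lemma guarantees that this reindexing preserves the relevant probability masses.
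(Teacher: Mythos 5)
Your proposal is correct and follows essentially the same route as the paper's proof: both reduce $p(\mathcal{C}\mid\mathbf{x},\boldsymbol{\theta}_i,\alpha)$ to a Dirac delta (or indicator) via the determinism of $f$, then marginalize over $\mathbf{x}$ to collect the probability mass on the preimage $f^{-1}(\mathcal{C})$. Your added care about the meaning of $\equiv$ as a pushforward identity is a reasonable clarification of a point the paper leaves informal, but it does not constitute a different argument.
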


\begin{proof}
Since $\mathcal{C}$ is a deterministic function of $\mathbf{x}$, the conditional distribution reduces to:
\begin{equation}
    p(\mathcal{C} \mid \mathbf{x}, \boldsymbol{\theta}_i, \alpha) = \delta(\mathcal{C} - f(\mathbf{x})).
\end{equation}

The joint distribution over $(\mathbf{x}, \mathcal{C})$ then factorizes as:
\begin{equation}
    p(\mathbf{x}, \mathcal{C} \mid \boldsymbol{\theta}_i, \alpha) = \delta(\mathcal{C} - f(\mathbf{x})) \cdot p(\mathbf{x} \mid \boldsymbol{\theta}_i, \alpha).
\end{equation}
Marginalizing out $\mathbf{x}$ yields:
\begin{equation}
    p(\mathcal{C} \mid \boldsymbol{\theta}_i, \alpha) = \int \delta(\mathcal{C} - f(\mathbf{x})) \cdot p(\mathbf{x} \mid \boldsymbol{\theta}_i, \alpha) \, d\mathbf{x}.
\end{equation}
This integral collects the total probability mass over all $\mathbf{x}$ such that $f(\mathbf{x}) = \mathcal{C}$. Hence, the two likelihoods are equivalent under the constraint.
\end{proof}

\begin{theorem}\label{thm:decomp}
The distribution over CAD sequences conditioned on current parameters admits the following factorization:
\begin{equation}
p(\mathbf{x}\mid\boldsymbol{\theta}_i, \alpha) \propto p(\mathbf{x}) \cdot p_S(\boldsymbol{\theta}_i\mid\mathbf{x}, \alpha),
\end{equation}
where $p_S$ denotes the sender distribution in the Bayesian flow framework.
\end{theorem}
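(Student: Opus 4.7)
The plan is to establish the factorization as a direct consequence of Bayes' theorem together with two mild structural observations about the Bayesian flow framework: first, that the sender distribution $p_S$ is by construction the likelihood of producing the belief state $\boldsymbol{\theta}_i$ from the clean sequence $\mathbf{x}$ at precision $\alpha$; and second, that $\alpha$ is a hyperparameter of the noise-injection schedule and does not carry information about the underlying data prior over CAD sequences.

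Concretely, I would proceed in three short steps. First, apply Bayes' rule to the posterior of interest to obtain
\[
p(\mathbf{x}\mid\boldsymbol{\theta}_i,\alpha) \;=\; \frac{p(\boldsymbol{\theta}_i\mid\mathbf{x},\alpha)\,p(\mathbf{x}\mid\alpha)}{p(\boldsymbol{\theta}_i\mid\alpha)}.
\]
Second, identify the numerator likelihood with the sender distribution, $p(\boldsymbol{\theta}_i\mid\mathbf{x},\alpha)=p_S(\boldsymbol{\theta}_i\mid\mathbf{x},\alpha)$, appealing directly to the definition of $p_S$ in Section~\ref{sec:pre}, where $\boldsymbol{\theta}_i$ is generated from $\mathbf{x}$ via the $\alpha$-scaled Gaussian perturbation and subsequent Bayesian update. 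Third, use the independence $\mathbf{x}\perp\alpha$ to reduce $p(\mathbf{x}\mid\alpha)$ to $p(\mathbf{x})$, and note that the denominator $p(\boldsymbol{\theta}_i\mid\alpha)$ is constant in $\mathbf{x}$, hence can be absorbed into the proportionality sign. Assembling these three facts yields the stated factorization $p(\mathbf{x}\mid\boldsymbol{\theta}_i,\alpha) \propto p(\mathbf{x})\,p_S(\boldsymbol{\theta}_i\mid\mathbf{x},\alpha)$.

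The main obstacle, more a matter of careful bookkeeping than a technical difficulty, is justifying the independence $\mathbf{x}\perp\alpha$. This should be argued from the generative story of BFNs: the schedule $\{\alpha_i\}$ parameterizes only the sender-side noise and plays no role in how the ground-truth CAD sequence is sampled from the data distribution $p(\mathbf{x})$. Making this modeling assumption explicit at the outset of the proof, perhaps as a one-line remark, is sufficient to legitimize the reduction $p(\mathbf{x}\mid\alpha)=p(\mathbf{x})$, after which the remainder of the argument is a routine application of Bayes' rule. This decomposition then plugs into Lemma~\ref{lemma:measure} to deliver the empirical estimators of $\mu(\boldsymbol{\theta}_i,\alpha)$ and $\Sigma(\boldsymbol{\theta}_i,\alpha)$ claimed in Theorem~\ref{thm:p2}, by reweighting samples $(\mathbf{x},\mathcal{C})\sim\mathcal{D}$ by $p_S(\boldsymbol{\theta}_i\mid\mathbf{x},\alpha)$ up to the normalization constant $Z$.
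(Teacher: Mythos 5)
Your proposal is correct and follows essentially the same route as the paper: a direct application of Bayes' rule, identification of the likelihood $p(\boldsymbol{\theta}_i\mid\mathbf{x},\alpha)$ with the sender distribution $p_S$, and absorption of the $\mathbf{x}$-independent normalizer $Z(\boldsymbol{\theta}_i,\alpha)=\mathbb{E}_{p(\mathbf{x}')}[p_S(\boldsymbol{\theta}_i\mid\mathbf{x}',\alpha)]$ into the proportionality. Your explicit remark on the independence $\mathbf{x}\perp\alpha$ (so that $p(\mathbf{x}\mid\alpha)=p(\mathbf{x})$) is a small point of extra care that the paper leaves implicit, but it does not change the argument.
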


\begin{proof}
Starting from Bayes’ rule:
\begin{equation}
p(\mathbf{x}\mid\boldsymbol{\theta}_i, \alpha) = \frac{p_S(\boldsymbol{\theta}_i\mid\mathbf{x}, \alpha) \cdot p(\mathbf{x})}{\mathbb{E}_{p(\mathbf{x}')}[p_S(\boldsymbol{\theta}_i\mid\mathbf{x}', \alpha)]}.
\end{equation}
The denominator is a normalizing constant:
\begin{equation}
Z(\boldsymbol{\theta}_i, \alpha) = \int p_S(\boldsymbol{\theta}_i\mid\mathbf{x}', \alpha) \cdot p(\mathbf{x}') \, d\mathbf{x}'.
\end{equation}
This yields the proportional form:
\begin{equation}
p(\mathbf{x}\mid\boldsymbol{\theta}_i, \alpha) \propto p(\mathbf{x}) \cdot p_S(\boldsymbol{\theta}_i\mid\mathbf{x}, \alpha),
\end{equation}
which underlies the weighted sampling formulation used in our constraint-aware estimator.
\end{proof}

We now derive the expressions for the mean and variance of $p(\mathcal{C} \mid \boldsymbol{\theta}_i, \alpha)$ using Lemma~\ref{lemma:measure} and Theorem~\ref{thm:decomp}.

By definition:
\begin{equation}
\mu(\boldsymbol{\theta}_i, \alpha) = \mathbb{E}_{p(\mathcal{C} \mid \boldsymbol{\theta}_i, \alpha)}[\mathcal{C}] = \mathbb{E}_{p(\mathbf{x} \mid \boldsymbol{\theta}_i, \alpha)}[f(\mathbf{x})].
\end{equation}

Substituting from Theorem~\ref{thm:decomp}:
\begin{equation}
\mathbb{E}_{p(\mathbf{x} \mid \boldsymbol{\theta}_i, \alpha)}[f(\mathbf{x})] = \frac{1}{Z} \int p(\mathbf{x}) \cdot p_S(\boldsymbol{\theta}_i \mid \mathbf{x}, \alpha) \cdot f(\mathbf{x}) \, d\mathbf{x}.
\end{equation}

This motivates the weighted empirical estimator:
\begin{equation}
\mu(\boldsymbol{\theta}_i, \alpha) = \frac{1}{Z} \sum_{(\mathbf{x}, \mathcal{C}) \sim \mathcal{D}} p_S(\boldsymbol{\theta}_i \mid \mathbf{x}, \alpha) \cdot \mathcal{C},
\end{equation}
where
\begin{equation}
Z = \sum_{(\mathbf{x}, \mathcal{C}) \sim \mathcal{D}} p_S(\boldsymbol{\theta}_i \mid \mathbf{x}, \alpha).
\end{equation}

Similarly, the variance is:
\begin{align}
\Sigma(\boldsymbol{\theta}_i, \alpha) 
&= \mathbb{E}_{p(\mathcal{C} \mid \boldsymbol{\theta}_i, \alpha)}\left[ (\mathcal{C} - \mu)^2 \right] \nonumber \\
&= \frac{1}{Z} \sum_{(\mathbf{x}, \mathcal{C}) \sim \mathcal{D}} 
p_S(\boldsymbol{\theta}_i \mid \mathbf{x}, \alpha) 
\left[\mathcal{C} - \mu(\boldsymbol{\theta}_i, \alpha)\right]^2.
\end{align}

\end{proof}


\section{Sampling Algorithms}
\label{app:algo}

We summarize the sampling procedures for both baseline and proposed models. Algorithm~\ref{alg:bfn} outlines the standard BFN sampling process~\cite{graves2023bayesian}, where each step samples a discrete class from a learned categorical distribution (parameterized by \texttt{OUTPUT\_DISTRIBUTION}, Alg.~\ref{alg:outdist}), injects Gaussian noise, and applies a normalized exponential transformation to update the latent parameters $\boldsymbol{\theta}$. Our proposed method, TGBFN (Alg.~\ref{alg:tgbfn}), extends this framework by incorporating Unbiased Bayesian Inference (UBI) for parallel sampling, Guided Bayesian Flow (GBF) for condition-aware refinement via $p_\psi(\mathcal{C}|\cdot)$, and Calibrated Distribution Proximal (CDP) for moment-preserving quantization. At each step, $m$ parallel samples are drawn, reweighted, and aggregated to balance uncertainty and constraint alignment, forming a coherent and efficient generative process for constrained CAD synthesis.

\begin{algorithm}[htbp]
\caption {OUTPUT\_DISTRIBUTION}
\label{alg:outdist}
\SetKwInOut{KwInput}{Require}
\KwInput{
$\boldsymbol{\theta}\in[0,1]^{KD}$, $t \in [0,1]$, target condition $\mathcal{C}$
}
\For{$d\in\{1,D\}$}{
    $p_O^{(d)}(\cdot \mid \boldsymbol{\theta},t,\mathcal{C})\leftarrow\text{softmax}(\Psi^{(d)}(\boldsymbol{\theta},t,\mathcal{C}))$
}
\Return{$p_O(\cdot \mid \boldsymbol{\theta,t,\mathcal{C}})$}\;
\end{algorithm}

\begin{algorithm}[htbp]
\caption {BFN Sampling~\cite{graves2023bayesian}}
\label{alg:bfn}
\SetKwInOut{KwInput}{Require}
\KwInput{
$\beta(1) \in \mathbb{R}_+$, number of steps $n \in \mathbb{N}$, number of classes $K \in \mathbb{N}$
}
$\boldsymbol{\theta} \leftarrow ( \frac{\mathbf{1}}{\mathbf{K}} )$\;
\For{$i=1$ \textbf{to} $n$}{
    $t \leftarrow \frac{i-1}{n}$\;
    $\mathbf{k} \sim \text{OUTPUT\_DISTRIBUTION}(\boldsymbol{\theta},t)$\;
    $\alpha\sim\beta(1)(\frac{2i-1}{n^2})$\;
    $\mathbf{y}\sim\mathcal{N}(\alpha(K\mathbf{e_k-1}), \alpha K\mathbf{I})$\;
    $\boldsymbol{\theta}\leftarrow \frac{e^\mathbf{y}\boldsymbol{\theta}}{\sum_k(e^\mathbf{y}\boldsymbol{\theta})_k}$\;
}
$\mathbf{k} \sim \text{OUTPUT\_DISTRIBUTION}(\boldsymbol{\theta},t)$\;
\Return{$\mathbf{k}$}\;
\end{algorithm}

\begin{algorithm}[htbp]
\caption {TGBFN sampling}
\label{alg:tgbfn}
\SetKwInOut{KwInput}{Require}
\KwInput{
target condition $\mathcal{C}$, $\beta(1) \in \mathbb{R}_+$, number of steps $n \in \mathbb{N}$, number of classes $K \in \mathbb{N}$, the trained conditional guided network $p_{\psi}$
}
$\boldsymbol{\theta}_0 \leftarrow ( \frac{\mathbf{1}}{\mathbf{K}} )$\;
\For{$i=1$ \textbf{to} $n$}{
    $t \leftarrow \frac{i-1}{n}$\;
    \For{$j=1$ \textbf{to} $m$}{
        \For{$h=1$ \textbf{to} $H$}{
            $\mathbf{k}_h \sim \text{OUTPUT\_DISTRIBUTION}(\boldsymbol{\theta}_{i-1},t,\mathcal{C})$\;
        }
        $\boldsymbol{k} = \mathrm{NEAREST\_CATEGORY}\left( \frac{1}{H}\sum_h \boldsymbol{k}_h \right)$\;
        $\alpha\sim\beta(1)(\frac{2i-1}{n^2})$\;
        $\mathbf{y}_j\sim\mathcal{N}(\alpha(K\mathbf{e_k-1}), \alpha K\mathbf{I})$\;
        $\boldsymbol{\theta}_i^{(j)}\leftarrow \frac{e^\mathbf{y}\boldsymbol{\theta}_{i-1}}{\sum_k(e^\mathbf{y}\boldsymbol{\theta}_{i-1})_k}$\;
        $\boldsymbol{\theta}_i^{(j)}\leftarrow\boldsymbol{\theta}_i^{(j)} \cdot p_{\psi}(\mathcal{C}\mid\boldsymbol{\theta}_i^{(j)},\alpha)$\;  
    }
    $\boldsymbol{\theta}_i\leftarrow\frac{\sum_j\boldsymbol{\theta}_i^{(j)}}{\sum_jp_{\psi}(\mathcal{C}\mid\boldsymbol{\theta}_i^{(j)},\alpha)}$
}
$\mathbf{k} \sim \text{OUTPUT\_DISTRIBUTION}(\boldsymbol{\theta}_n,t,\mathcal{C})$\;
\Return{$\mathbf{k}$}\;
\end{algorithm}


\section{Ablation under Single-Property Constraints}
\label{app:ablation-single}

To isolate the contributions of individual components in our framework, we perform an ablation study under single-property supervision, where models are conditioned solely on either surface area or volume. The results, summarized in Table~\ref{tab:ablation_area}, reveal that incorporating the Unbiased Bayesian Inference (UBI) module yields the most significant performance improvement over the base model for both constraints, markedly reducing errors and increasing correlation. The addition of Guided Bayesian Flow (GBF) further enhances performance, particularly by improving alignment with geometric targets. Finally, the Calibrated Distribution Estimation (CDE) component offers marginal but consistent gains in accuracy and correlation, culminating in the best overall model performance.

\begin{table}[htbp]
\centering
\caption{Ablation study under single-property supervision. We evaluate the contribution of each component by progressively adding them to the base model. Results are reported separately for surface area and volume constraints.}
\vspace{-0.3cm}
\resizebox{1.0\linewidth}{!}{
\begin{tabular}{@{}l||ccc@{}}
\toprule
Model & {MSE $\downarrow$} & {MAE $\downarrow$} & {PCC $\uparrow$} \\
\toprule
\multicolumn{4}{c}{\textbf{Surface Area Constraints}} \\
\toprule
Base Model   & 5.1052 & 0.7854 & 0.7865 \\
BFN + UBI    & 0.5705 & 0.4541 & 0.9341 \\
BFN + UBI + GBF  & 0.4666 & 0.3472 & 0.9488 \\
\midrule
\rowcolor{gray!15}
Base + UBI \& GBF \& CDE~(Ours) & \textbf{0.3503} & \textbf{0.3067} & \textbf{0.9587} \\
\bottomrule
\multicolumn{4}{c}{\textbf{Volume Constraints}} \\
\toprule
Base Model       & 0.2188 & 0.1030 & 0.7748 \\
BFN + UBI    & 0.0172 & 0.0584 & 0.9267 \\
BFN + UBI + GBF  & 0.0160 & 0.0574 & 0.9231 \\
\midrule
\rowcolor{gray!15}
Base + UBI \& GBF \& CDE~(Ours) & \textbf{0.0158} & \textbf{0.0529} & \textbf{0.9342} \\
\bottomrule
\end{tabular}
}
\label{tab:ablation_area}
\end{table}

These findings confirm that each component contributes meaningfully under both constraint types and that the full model achieves the most reliable and accurate constraint-aware CAD generation.


\section{Implementation Details} \label{app:id}

\subsection{Training Configuration}
The skeleton network $\phi$ is trained with a learning rate of $1\times10^{-6}$ and a batch size of 128 on a single RTX 4090 GPU for approximately 12 days. The conditional guidance network $\psi$ is trained separately with a learning rate of $1\times10^{-5}$ and the same batch size on a single RTX 4090 GPU for 3 days.

\subsection{Network Architecture}
We employ a decoder-style Transformer as the denoising network $\phi$ to model the distribution over CAD token sequences. The architecture comprises 21 Transformer layers, each with 16 attention heads and a hidden size of 1024. Input tokens are embedded into a 1024-dimensional space and augmented with learnable positional encodings via $\texttt{nn.Embedding}(L, 1024)$, where $L$ is the sequence length.

The network follows the standard Transformer configuration, incorporating Layer Normalization, residual connections, and Dropout. Unlike autoregressive models, we do not apply causal masking, allowing full bidirectional attention across the token sequence. At each step, the model predicts a categorical distribution over a fixed vocabulary of 263 tokens, enabling full-sequence refinement during denoising-based generation.


\begin{figure*}[t]
    \centering
    \includegraphics[width=1.0\linewidth]{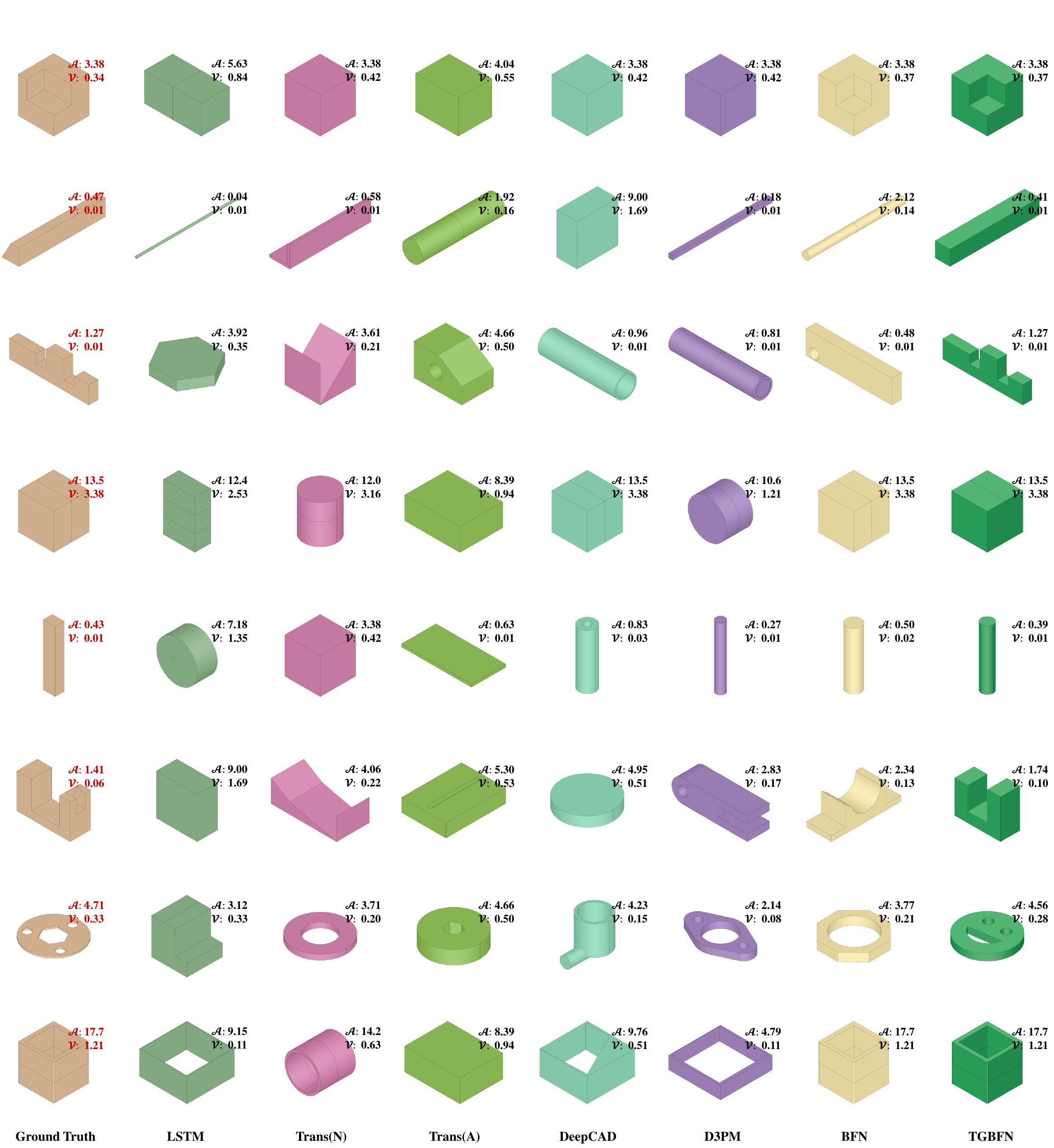}
    \caption{Additional case studies of CAD sequence generation under surface area and volume constraints.}
    \label{fig:appcs}
\end{figure*}

\end{document}